\theoremstyle{plain}
\newtheorem{theorem}{Theorem}[section]
\newtheorem{lemma}[theorem]{Lemma}
\newtheorem{corollary}[theorem]{Corollary}
\theoremstyle{plain}
\newtheorem{definition}{Definition}[section] 
\newtheorem{assumption}[definition]{Assumption}
\DeclareMathAlphabet{\mathpzc}{OT1}{pzc}{m}{it}
\newcommand{\newagentvar}[3][\noaccents]{%
\expandafter\newcommand\expandafter{\csname #2\endcsname}{#1{#3}}%
\expandafter\newcommand\expandafter{\csname #2s\endcsname}{#1{\boldsymbol{#3}}}%
\expandafter\newcommand\expandafter{\csname #2smi\endcsname}[1][i]{#1{\boldsymbol{#3}}_{-##1}}%
\expandafter\newcommand\expandafter{\csname #2i\endcsname}[1][i]{#1{#3}\agind[##1]}%
\expandafter\newcommand\expandafter{\csname #2ith\endcsname}[1][i]{#1{#3}_{(##1)}}%
}
\newcommand{\newitemvar}[3][\noaccents]{%
\expandafter\newcommand\expandafter{\csname #2\endcsname}{#1{#3}}%
\expandafter\newcommand\expandafter{\csname #2s\endcsname}{#1{\boldsymbol{#3}}}%
\expandafter\newcommand\expandafter{\csname #2smj\endcsname}[1][j]{#1{\boldsymbol{#3}}_{-##1}}%
\expandafter\newcommand\expandafter{\csname #2j\endcsname}[1][j]{#1{#3}_{##1}}%
\expandafter\newcommand\expandafter{\csname #2jth\endcsname}[1][j]{#1{#3}_{(##1)}}%
}
\newcommand{\reward}{r}
\newcommand{\context}{x}
\newcommand{\hidden}{\theta}
\newcommand{\empirical}{\hat{\hidden}}
\newcommand{\bestcontext}{\context^*}
\newcommand{\chosencontext}{\context_t}
\newcommand{\dimension}{d}
\newcommand{\confidence}{\delta}
\newcommand{\xSet}{D}
\newcommand{\partition}{K}
\newcommand{\empiricaldiff}{\hat{\truediff}}
\newcommand{\truediff}{\Delta}
\newcommand{\parameter}{\zeta}
\newcommand{\epoch}{m}
\newcommand{\num}{n}
\newcommand{\totalnum}{N}
\newcommand{\horizon}{T}
\newcommand{\corruption}{C}
\newcommand{\corruptionf}{c}
\newcommand{\exploreSet}{S}
\newcommand{\explore}{s}
\newcommand{\exploreRatio}{\gamma}
\newcommand{\norm}[1]{\left\lVert#1\right\rVert}
\newcommand{\noise}{\eta}
\newcommand{\event}{\mathcal{E}}
\newcommand{\regret}{\mathcal{R}}
\newcommand{\extreme}{P}
\newcommand{\rv}{\rho}
\newcommand{\bestimpirical}{\context^{(\epoch)}_*}
\newcommand{\secondimpirical}{\context^{(\epoch)}_{(2)}}
\newcommand{\secondbest}{\context_{(2)}}
\newcommand{\estimationdiff}{\beta}
\newcommand{\sumerror}{\rho}
\newcommand{\ha}{\hat{a}}
\newcommand{\ellipsoid}{E}
\newcommand{\numExplore}{\num_e}
\DeclareMathOperator*{\argmax}{arg\,max}
\newcommand{\given}{\,\mid\,}
\newcommand{\prob}[2][]{\text{\bf Pr}\ifthenelse{\not\equal{}{#1}}{_{#1}}{}\!\left[{\def\givenn{\middle|}#2}\right]}
\newcommand{\expect}[2][]{\text{\bf E}\ifthenelse{\not\equal{}{#1}}{_{#1}}{}\!\left[{\def\givenn{\middle|}#2}\right]}
\newcommand{\variance}[2][]{\text{\bf Var}\ifthenelse{\not\equal{}{#1}}{_{#1}}{}\!\left[{\def\givenn{\middle|}#2}\right]}
\newcommand{\tparen}{\big}
\newcommand{\tprob}[2][]{\text{\bf Pr}\ifthenelse{\not\equal{}{#1}}{_{#1}}{}\tparen[{\def\given{\tparen|}#2}\tparen]}
\newcommand{\texpect}[2][]{\text{\bf E}\ifthenelse{\not\equal{}{#1}}{_{#1}}{}\tparen[{\def\given{\tparen|}#2}\tparen]}
\newcommand{\sprob}[2][]{\text{\bf Pr}\ifthenelse{\not\equal{}{#1}}{_{#1}}{}[#2]}
\newcommand{\sexpect}[2][]{\text{\bf E}\ifthenelse{\not\equal{}{#1}}{_{#1}}{}[#2]}
\newcommand{\indicator}[1]{\mathbbm{1}\left[ #1 \right]}
\newcommand{\abs}[1]{\left| #1 \right |}
\newcommand{\bigO}[1]{O \left( #1 \right )}
\newcommand{\slr}[1]{\left( #1 \right)}
\newcommand{\mlr}[1]{\left[ #1 \right]}
\newcommand{\blr}[1]{\left\{ #1 \right\}}
\newcommand{\inner}[2]{\langle #1,#2 \rangle}
\newcommand{\reals}{\mathbb{R}}
\let\oldparagraph\paragraph
\renewcommand{\paragraph}[1]{\oldparagraph{#1.}}
\begin{document}


\title{Stochastic Linear Optimization with Adversarial Corruption}
\author{Yingkai Li\thanks{Department of Computer Science, Northwestern University.
Email: \texttt{yingkai.li@u.northwestern.edu}.}
\and Edmund Y. Lou\thanks{Department of Economics, Northwestern University.
Email: \texttt{edmund.lou@u.northwestern.edu}.}
\and Liren Shan\thanks{Department of Computer Science, Northwestern University.
Email: \texttt{lirenshan2023@u.northwestern.edu}.}}
\date{\today}

\maketitle

\begin{abstract}
We extend the model of stochastic bandits with adversarial corruption \citep{lykouris2018stochastic} to the stochastic linear optimization problem \citep{dani2008stochastic}. 
Our algorithm is agnostic to the amount of corruption chosen by the adaptive adversary. 
The regret of the algorithm only increases linearly in the amount of corruption.
Our algorithm involves using L\"{o}wner-John's ellipsoid 
for exploration 
and dividing time horizon into epochs 
with exponentially increasing size 
to limit the influence of corruption.

\end{abstract}
\section{Introduction}
\label{sec:intro}

The multi-armed bandit problem has been extensively studied in computer science, operations research and economics
since the seminal work of \cite{robbins1952some}. It is a model designed for sequential decision-making in which  
a player chooses at each time step amongst a finite set of available arms and receives a reward for the chosen decision. 
The player's objective is to minimize the difference, called regret, between the rewards she receives and the rewards accumulated
by the best arm. The rewards of each arm is drawn from a probability distribution in the stochastic multi-armed bandit problem; but 
in adversarial multi-armed bandit models, there is typically no assumption imposed on the sequence of rewards received by the player.

In recent work, \cite{lykouris2018stochastic} introduce a model in which 
an adversary could corrupt the stochastic reward generated by an arm pull. They provide an algorithm and show that the regret of 
this ``middle ground'' scenario degrades smoothly with the amount of corruption injected by the adversary.
\cite{gupta2019better} present an alternative algorithm which gives a significant improvement. 

With real-world applications such as fake reviews and effects of employing celebrity brand 
ambassadors in mind \citep{kapoor2019corruption},
we complement the literature by incorporating the notion of corruption into the stochastic linear optimization problem, and hence answering an open question suggested in \cite{gupta2019better}, in the framework of \cite{dani2008stochastic}.
In our finite-horizon model, the player chooses at each time step $t \leq \horizon$ a vector (i.e., an arm) in a fixed decision set $\xSet \subseteq \reals^\dimension$. 
To consider the problem dependent bound, we assume that $\xSet$ is a $\dimension$-dimensional polytope as in \cite{abbasi2011improved}. 
The regret of our algorithm is 
$\bigO{\frac{\dimension^{5/2} \corruption \log \horizon}{\truediff}
+ \frac{\dimension^6 
\log (\sfrac{\dimension \log \horizon}{\delta})
\log \horizon}{\truediff^2}}$, 
where $\truediff$ corresponds 
to the distance between the highest and lowest expected rewards, $C$ the amount of corruption and $\delta$ the level of confidence. In contrast to the stochastic model with corruption, 
our regret suffers an extra multiplicative loss of $\sfrac{1}{\truediff}$, which is caused by the separation of exploration and exploitation.

\subsection{Related works}
The finite-arm version of the stochastic linear optimization problem is introduced in \citet{auer2002using}. 
When the number of arms becomes infinity, 
the CONFIDENCEBALL algorithm \citep{dani2008stochastic}
obtains the worst case regret bound of $\bigO{d\sqrt{\horizon \log^3 \horizon}}$. \citet{li2019tight} improve this result by replacing $\log^2 \horizon$ by a $\log \log \horizon$ dependence.
For the problem dependent bound,  \citet{abbasi2011improved} 
show that the regret of their OFUL algorithm is 
$\bigO{\frac{\log (\sfrac{1}{\delta})}{\truediff} 
(\log \horizon + \dimension \log \log \horizon)^2}$, 
and
our algorithm achieves at least the same asymptotic performance when there exists an $\bigO{\log \horizon}$ amount of corruption. 
Similar to the result of \cite{lykouris2018stochastic}, 
both the CONFIDENCEBALL algorithm and the OFUL algorithm 
suffer linear $\Omega(T)$ regret even when the amount of corruption appears to be small.


There also have been works that strive to achieve good regret guarantees in both stochastic multi-armed bandit models and their adversarial counterparts, commonly known as ``the best of both worlds'' (e.g., \cite{bubeck2012best} and \cite{zimmert2018optimal}). In those algorithms the regret does not degrade smoothly as the amount of adversarial corruption increases. \cite{kapoor2019corruption} consider the corruption setting in the linear contextual bandit problem under a strong assumption that at each time step the adversary corrupts the data with a constant probability.


Our algorithm builds on \citet{gupta2019better}. 
To eliminate the effect from corruption, 
we borrow the idea of dividing the time horizon into epochs which increase exponentially in length 
and use only the estimation from the previous epoch to conduct exploitation in the current round. 
This approach weakens the dependence of current estimate on the levels of earlier corruption, 
so the negative impact from the adversary fades away over time. The main challenge of our paper is that 
we cannot simply adopt the widely used ordinary least square estimator since the correlation between different time steps of estimation impedes the application of concentration inequalities. We thus conduct exploration on each coordinate independently.


\section{Preliminaries}
\label{sec:prelim}

Let $\xSet \subseteq \reals^\dimension$ be a $\dimension$-polytope.
At each time step $t \in [\horizon] := \{1, 2, \dots, \horizon\}$,
the algorithm chooses an action $\chosencontext \in \xSet$.
Let $\hidden \in \reals^\dimension$ be an unknown hidden vector and $\{\noise_t\}$ a sequence of sub-Gaussian random noise with mean 0 and variance proxy 1. For a given time step, $t$, and a chosen action, $\context_t$, we define the reward as $\reward_t(\context_t) = \inner{\context_t}{\hidden} + \noise_t$, where the first term is the inner product of $\chosencontext$ and $\hidden$.
We assume without loss of generality that 
$\norm{\hidden}_2 \leq 1$ and 
$\norm{\context}_2 \leq 1$ for all $\context \in \xSet$. 

At each time step $t \leq \horizon$, 
there is an adaptive adversary who may corrupt 
the observed reward 
by choosing 
a corruption function 
$\corruptionf_t:\xSet \to [-1, 1]$. 
The algorithm chooses first $\chosencontext$, 
then observes the corrupted reward 
$\reward_t(\chosencontext) + \corruptionf_t(\chosencontext)$, 
and finally receives the actual reward 
$\reward_t(\chosencontext)$. 
We denote by $\corruption = \sum_{t = 1}^T 
\max_{\context \in \xSet} 
\abs{\corruptionf_t(\context)}$ the total corruption generated by the adversary. The value of $\corruption$ is unknown to the algorithm, which is, in turn, evaluated by \textit{pseudo-regret}:
\begin{align*}
\regret(\horizon) = 
\sum_{t=1}^{\horizon} 
\inner{\bestcontext - \chosencontext}{\hidden},
\end{align*}
where $\bestcontext$ is an action that  
maximizes the expected reward. 
In this paper, we assume that $\bestcontext$ 
is unique.\footnote{This assumption is without loss of generality because it is of probability 1 that the best action is unique when the action set is perturbed with a random noise.}
Let $\extreme$ be the set of extreme points of $\xSet$ and $\extreme^- = \extreme \backslash \{\bestcontext\}$. 
The extreme point that generates the second highest reward is denoted $\secondbest$; i.e., 
$\secondbest = \argmax_{\context \in \extreme^-}
{\inner{\bestcontext - \context}{\hidden}}$. 
Thus the corresponding expected reward gap between $\bestcontext$ and $\secondbest$ is given by 
\begin{align*}
\truediff = 
\inner{\bestcontext - \secondbest}{\hidden}. 
\end{align*}

We now introduce the so-called L{\"o}wner-John ellipsoid (see \citet{grtschel1988geometric} for a detailed discussion), 
which plays a key role in the construction of our algorithm. 
\begin{theorem}[L{\"o}wner-John's Ellipsoid Theorem]
\label{thm:john theorem origin}
For any bounded convex body $K \subseteq \reals^\dimension$,
there exists an ellipsoid $E$ satisfying
\begin{align*}
    \ellipsoid \subseteq K \subseteq \dimension \ellipsoid.
\end{align*}
\end{theorem}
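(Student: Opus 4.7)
The plan is to take $\ellipsoid$ to be a maximum-volume ellipsoid inscribed in $K$, the classical John ellipsoid. Existence follows from a standard compactness argument: ellipsoids in $\reals^\dimension$ are parametrized by a center $c$ and a positive definite matrix $A$, the family of $(c,A)$ whose corresponding ellipsoid is contained in the bounded convex body $K$ is closed and bounded, and the Lebesgue volume is continuous in $(c,A)$. A maximizer therefore exists, and the inclusion $\ellipsoid \subseteq K$ is built into the definition.

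By the affine invariance of both volume ratios and the notion of an ellipsoid, I may assume without loss of generality that $\ellipsoid$ is the closed unit ball $B$ centered at the origin; it then suffices to verify $K \subseteq \dimension B$. I proceed by contradiction: suppose some $p \in K$ has $\norm{p}_2 > \dimension$. Convexity of $K$ gives $\text{conv}(B \cup \{p\}) \subseteq K$, so producing an ellipsoid of volume strictly larger than that of $B$ inside this convex hull will contradict the maximality of $\ellipsoid$.

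After rotating so that $p = (s, 0, \dots, 0)$ with $s > \dimension$, the convex hull decomposes into a spherical cap on the left together with a truncated cone whose lateral surface is the union of lines from $p$ tangent to $B$; these tangents touch $B$ along the circle $\{x_1 = 1/s\}$. I then consider the axis-aligned two-parameter family
\begin{equation*}
\ellipsoid' = \blr{x \in \reals^\dimension \suchthat \frac{(x_1 - c)^2}{a^2} + \frac{x_2^2 + \cdots + x_\dimension^2}{b^2} \leq 1},
\end{equation*}
whose volume relative to $B$ equals $a\,b^{\dimension - 1}$. Setting $a = 1 + \tau$, $b = 1 - \alpha \tau$, and $c = \beta \tau$ for small $\tau > 0$ and constants $\alpha,\beta$ to be chosen, the conditions that $\ellipsoid'$ sit inside the ball on the cap side and inside the cone on the $p$ side linearize to first order in $\tau$ and cut out an interval of admissible $(\alpha,\beta)$ depending on $s$.

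The main obstacle is verifying that this admissible region contains a value $\alpha < 1/(\dimension - 1)$ precisely when $s > \dimension$. This is the classical Fritz John estimate: expanding $a\,b^{\dimension - 1} = (1 + \tau)(1 - \alpha \tau)^{\dimension - 1} = 1 + \bigl(1 - (\dimension - 1)\alpha\bigr)\tau + O(\tau^2)$ shows that any such $\alpha$ yields an ellipsoid of strictly larger volume than $B$ for sufficiently small $\tau > 0$. The threshold $s = \dimension$ drops out of the geometry of the tangent cone, and the strict inequality $s > \dimension$ delivers the required admissible $(\alpha,\beta)$, contradicting the maximality of $\ellipsoid$ and completing the argument.
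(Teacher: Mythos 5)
The paper does not prove this statement: it is quoted as the classical L\"owner--John theorem with a pointer to Gr\"otschel, Lov\'asz and Schrijver, so there is no internal proof to compare against. Your proposal follows the standard Fritz John argument (maximal-volume inscribed ellipsoid, normalize to the unit ball $B$, derive a contradiction from a point $p\in K$ with $\norm{p}_2>\dimension$ by exhibiting a larger ellipsoid inside $\mathrm{conv}(B\cup\{p\})$), which is the right and essentially the only elementary route to the containment factor $\dimension$.

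However, as written the argument has a gap at exactly the step that carries all the content. Everything up to the last paragraph is soft: existence of the maximizer, affine normalization, the reduction to the ice-cream cone $\mathrm{conv}(B\cup\{s e_1\})$, and the volume expansion $(1+\tau)(1-\alpha\tau)^{\dimension-1}=1+\bigl(1-(\dimension-1)\alpha\bigr)\tau+O(\tau^2)$. The decisive claim --- that the linearized containment constraints from the spherical cap and from the tangent cone admit a solution with $\alpha<1/(\dimension-1)$ precisely when $s>\dimension$ --- is the entire theorem (it is where the factor $\dimension$, rather than some other function of $\dimension$, comes from), and you dispose of it by calling it ``the classical Fritz John estimate'' and asserting that ``the threshold $s=\dimension$ drops out of the geometry.'' That is citing the result you are meant to prove. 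To close the gap you must actually write down the two first-order inequalities (containment in the unit ball on the portion of the ellipsoid with $x_1\leq 1/s$, and containment in the cone $\norm{(x_2,\dots,x_\dimension)}_2\leq(s-x_1)/\sqrt{s^2-1}$ on the rest), check that they are consistent with $1-(\dimension-1)\alpha>0$ iff $s>\dimension$, and justify that the first-order analysis controls the genuine containment for small $\tau$ (or, as in Ball's or Matou\v{s}ek's expositions, exhibit an explicit non-perturbative ellipsoid and verify containment directly). A second, minor issue: the set of pairs $(c,A)$ with $A$ positive definite and ellipsoid contained in $K$ is not closed --- sequences can degenerate to singular $A$ --- so your compactness argument should pass to the closure (positive semidefinite $A$), note that volume is continuous there and vanishes on the degenerate boundary, and use that $K$ has nonempty interior to guarantee the supremum is positive and hence attained at a nondegenerate ellipsoid.
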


A discussion of finding efficiently the L{\"o}wner-John ellipsoid is deferred in \Cref{sec:efficiency}. Let $E \subseteq D$ be a L{\"o}wner-John ellipsoid guaranteed by \Cref{thm:john theorem origin}. 
Let $\explore_0$ be the center and $\explore_j$ the $j$-th principal axis, $j \in [d]$, of $E$. 
Without loss of generality, we assume that $\explore_0$ is the origin; otherwise we could shift the origin toward $\explore_0$ such that the new decision set $\xSet^\prime = \xSet - s_0$. 
Then the reward for each action is shifted by the same constant, 
and therefore the problem remains unchanged. In what follows, we dub $\exploreSet = \blr{\explore_1,\cdots,\explore_\dimension}$ the \textit{exploration set}. 
It is worth noting that $\exploreSet$ corresponds to an orthogonal basis for $\xSet$.
From \Cref{thm:john theorem origin}, we obtain the following result.
\begin{corollary}
\label{cor:johnellipse}
For each $\context \in \xSet$, 
we have $\context = \sum_{j=1}^{\dimension} \ha_j \explore_j$, 
where $\abs{\ha_j} \leq 2\dimension$. 
\end{corollary}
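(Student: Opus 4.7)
The plan is to use the outer inclusion $\xSet \subseteq \dimension \ellipsoid$ from \Cref{thm:john theorem origin}, combined with the fact that $\explore_1, \ldots, \explore_\dimension$ are the principal semi-axis vectors of a centered ellipsoid, to expand any point of $\xSet$ in the basis $\exploreSet$ with the required coefficient bound.

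After the shift making $\explore_0 = 0$, the ellipsoid admits the canonical representation
\begin{align*}
\ellipsoid = \blr{\sum_{j=1}^{\dimension} \alpha_j \explore_j \suchthat \sum_{j=1}^{\dimension} \alpha_j^2 \leq 1},
\end{align*}
so every $y \in \ellipsoid$ has coordinates satisfying $\abs{\alpha_j} \leq 1$ in the orthogonal basis $\exploreSet$. Given any $\context \in \xSet$, the inclusion $\xSet \subseteq \dimension \ellipsoid$ implies that $\context/\dimension \in \ellipsoid$, which expands as $\context/\dimension = \sum_{j=1}^{\dimension} \alpha_j \explore_j$ with $\abs{\alpha_j} \leq 1$. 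Rescaling by $\dimension$ then yields $\context = \sum_{j=1}^{\dimension} \ha_j \explore_j$ with $\ha_j = \dimension \alpha_j$ and $\abs{\ha_j} \leq \dimension \leq 2\dimension$, which is the claim.

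No real obstacle arises: the argument is essentially a change of variables between the ellipsoid's intrinsic coordinate system and the ambient space, so the only thing to verify carefully is that the shift from $\explore_0$ to the origin does not disturb the containment $\ellipsoid \subseteq \xSet \subseteq \dimension \ellipsoid$ (it does not, since the scaling in \Cref{thm:john theorem origin} is about the center of $\ellipsoid$). The factor $2\dimension$ in the statement leaves slack relative to the tighter bound $\dimension$ that the argument actually delivers.
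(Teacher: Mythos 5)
Your argument is correct and is the natural (indeed, essentially the only) way to derive the corollary from Theorem \ref{thm:john theorem origin}; the paper itself states the corollary without proof, so there is nothing different to compare against. As you note, passing through $\context/\dimension \in \ellipsoid$ and the canonical parametrization $\sum_j \alpha_j \explore_j$ with $\sum_j \alpha_j^2 \leq 1$ even yields the sharper bound $\abs{\ha_j} \leq \dimension$, so the paper's constant $2\dimension$ is simply slack (presumably kept so that the same statement also covers the weaker ellipsoid of \Cref{thm:john theorem}).
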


\begin{algorithm}[t]
\SetAlgoLined
\caption{SBE: Support Basis Exploration Algorithm}
\label{alg:explobycoor}
 Parameters: 
 Confidence $\confidence \in (0,1)$, 
 time horizon $\horizon$, 
 decision set $\xSet$. 
 
 Initialization: 
 Exploration set $\exploreSet = \{\explore_j\}_{j \in [\dimension]}$.
 
 
 Set $\parameter = 2^{14} \dimension^6 
\log (\sfrac{4\dimension \log \horizon}{\delta})$. 
 Set estimated gap 
 $\empiricaldiff^{(0)} = 1$ 
 and exploration ratio $\exploreRatio_{0} = \sfrac{1}{5}$. 
 
\For{epoch $\epoch=1,2,\cdots, M$}{ 

    Set $\num_\epoch = \parameter \cdot 4^\epoch$.
    Let $\totalnum_\epoch = \num_\epoch + \parameter (\empiricaldiff^{(\epoch-1)})^{-2}$, 
    and $\horizon_\epoch 
    = \horizon_{\epoch-1} 
    + \totalnum_\epoch$. 
    
\For{$t$ from $\horizon_{\epoch-1} + 1$ to $\horizon_\epoch$}{

    \eIf{$Z = 1$ \text{for Bernoulli random variable} $Z \sim \mathrm{Bernoulli}(\gamma_{\epoch-1})$}{Sample uniformly an action from the exploration set $\exploreSet$.}
    {Choose the best action $\context_*^{(\epoch-1)}$
    according to the estimate $\empirical^{(\epoch-1)}$.}
    
}
Let $\empirical^{(\epoch)}$ be the estimate
of $\theta$ in this epoch, 
defined later in \Cref{sec:parameter}. 

Set $\empiricaldiff^{(\epoch)}$ as the 
maximum of $2^{-\epoch}$ 
and the difference between the expected reward for the best and second best actions given $\empirical^{(\epoch)}$.

Set $\exploreRatio_\epoch = \sfrac{(\empiricaldiff^{(\epoch)})^{-2}}{\slr{(\empiricaldiff^{(\epoch)})^{-2} + 2^{2(\epoch+1)}}}$.
}
\end{algorithm}
\section{The SBE algorithm}
\label{sec:general}

In this section, we introduce our Support Basis Exploration (SBE) algorithm
for the stochastic linear optimization problem with adversarial corruption (see \Cref{alg:explobycoor}).

The algorithm runs in epochs which increase exponentially in length. 
Each epoch $\epoch$ has a length greater than $4^\epoch$, and therefore the total number of epochs $M$ is bounded above by $\log \horizon$. 
The choice of current action depends only on information received from the last epoch, 
so the level of earlier corruption will have a decreasing effect on later epochs.
Different from other algorithms for stochastic linear optimization models, we separate exploration and exploitation so that we can decrease the correlation between vector pulls in each epoch and thus minimize the influence of adversarial corruption on the estimate. This approach will inevitably increase the regret
by a multiplicative $\sfrac{1}{\truediff}$ factor.


Given the exploration set $\exploreSet$ defined in Section \ref{sec:prelim}, 
we can represent each vector in the decision set~$\xSet$ according to the elements of $\exploreSet$. By \Cref{cor:johnellipse}, the coefficient on each coordinate, in this new representation, is bounded by $2\dimension$. 
It follows that the maximal projection on the basis vector $\explore_j$ is simply $2d\cdot\explore_j$. 
In other words, $\explore_j$ contains the maximum information up to a constant $2\dimension$ in its own direction. 
Since basis vectors $\explore_j$ and $\explore_k$ are orthogonal to each other, 
there is no information loss using the exploration set $\exploreSet$ in the algorithm. 
Thus, we obtain a better concentration in each round of estimation.
Note that our algorithm can take any basis $S$ as input that has similar performance as in \Cref{cor:johnellipse}, 
and in \Cref{sec:efficiency}, 
we provide an efficient algorithm that finds such a set with a multiplicative loss $\dimension$ in regret. 
The construction of other parameters in the algorithm
is explained in the next section.

\section{Parameter estimation}
\label{sec:parameter}

We now know that the hidden vector, $\hidden$, can be represented according to the exploration set $\exploreSet$; 
that is, $\hidden = \sum_{j=1}^\dimension b_j \explore_j$. 
For any $j \in [\dimension]$, let $\xi_{j}^t$ be an indicator defined on the event if the basis vector $\explore_j$ is chosen in time step $t$.
Let $\numExplore^{(\epoch)} = \expect{\sum_{t = T_{\epoch-1}+1}^{T_\epoch} \xi_{j}^t}$ 
be the expected number of time steps used to explore each basis vector $\explore_j$.
But since $\explore_j$ is sampled uniformly, it follows that $\numExplore^{(\epoch)}$ is independent of $j$. 
Then, the ``average reward"
for exploring $\explore_j$ in epoch $\epoch$ is\footnote{
This is not the actual average reward
as $\numExplore^{(\epoch)}$ is not the realized number of time steps used to explore $\explore_j$.} 

\begin{align*}
    \reward^{(\epoch)}_{j} 
    = \frac{1}{\numExplore^{(\epoch)}} 
    \sum_{t= \horizon_{m-1} + 1}^{\horizon_m} 
    \xi_{j}^t \cdot \slr{\inner{\explore_j }{\hidden} 
    + \noise_t + \corruptionf_t(\explore_j)}.
\end{align*}

Note that $\xi_j^t$ 
is independent of the noise, $\noise_t$, as well as the amount of corruption, $\corruptionf_t(\explore_j)$,
taking expectation over the randomness 
of independent variables $\xi_j^t$ and $\noise_t$
on both sides yields
\begin{align*}
    \expect{\reward^{(\epoch)}_{j}} = \inner{\explore_j}{\hidden} 
    + \frac{1}{\totalnum_\epoch} \sum_{t= \horizon_{m-1} + 1}^{\horizon_m} \expect{\corruptionf_t(\explore_j)}
    \leq b_j \norm{\explore_j}^2_2 
    + \frac{\corruption_\epoch}{\totalnum_\epoch},
\end{align*}
where $\corruption_\epoch = \sum_{t= \horizon_{m-1} + 1}^{\horizon_m} 
\max_{\context \in \xSet} 
\abs{\corruptionf_t(\context)}$. 
At the end of each epoch $\epoch$, we have 
$\hat{b}^{(\epoch)}_j = 
\frac{\reward^{(\epoch)}_{j}}{\norm{\explore_j}^2_2}$ 
as the estimate of $b_j$ and
$\empirical^{(\epoch)} = 
\sum_{j=1}^\dimension \hat{b}^{(\epoch)}_j \explore_j$
as the estimate of~$\hidden$. 
Before giving an uniform bound for the error in expected reward $\inner{x}{\empirical^{(\epoch)}-\hidden}$, we provide first an upper bound for the error 
of $\empirical^{(\epoch)}$ in each dimension $j$.


\subsection{Error of estimated reward}
\begin{lemma}
\label{lem:estimate theta}
With probability at least $1-\confidence$, 
the estimate $\hat{b}_j^{(\epoch)}$ is such that
\begin{align*}
    \abs{\hat{b}_j^{(\epoch)} - b_j}\norm{\explore_j}^2_2 
    \leq \frac{2\corruption_\epoch}{\totalnum_\epoch} 
    + \frac{\empiricaldiff^{(\epoch-1)}}{32\dimension^2}
\end{align*}
for all $j \in [\dimension]$ and for all epoch $\epoch \in [M]$.
\end{lemma}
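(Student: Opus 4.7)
The plan is to decompose the one-coordinate error, bound each piece by a standard concentration inequality, and close with a union bound over $(j,\epoch) \in [\dimension]\times[M]$ (noting $M \leq \log\horizon$ since each epoch has length $\geq 4^\epoch$). Using the orthogonality of $\exploreSet$, $\hidden = \sum_k b_k\explore_k$ gives $\inner{\explore_j}{\hidden} = b_j\norm{\explore_j}_2^2$, so $|\hat b_j^{(\epoch)} - b_j|\norm{\explore_j}_2^2 = |r_j^{(\epoch)} - b_j\norm{\explore_j}_2^2|$. Writing $K_j := \sum_{t=\horizon_{\epoch-1}+1}^{\horizon_\epoch}\xi_j^t$ for the realized exploration count, we expand
\begin{align*}
r_j^{(\epoch)} - b_j\norm{\explore_j}_2^2
= b_j\norm{\explore_j}_2^2 \cdot \frac{K_j - \numExplore^{(\epoch)}}{\numExplore^{(\epoch)}}
+ \frac{1}{\numExplore^{(\epoch)}}\sum_t \xi_j^t\noise_t
+ \frac{1}{\numExplore^{(\epoch)}}\sum_t \xi_j^t\corruptionf_t(\explore_j).
\end{align*}

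The first two terms are each $O\bigl(\sqrt{\log(1/\confidence)/\numExplore^{(\epoch)}}\bigr)$: the prefactor satisfies $|b_j|\norm{\explore_j}_2^2 = |\inner{\explore_j}{\hidden}| \leq \norm{\explore_j}_2 \leq 1$ because $\ellipsoid \subseteq \xSet$ lies in the unit ball, and a Chernoff bound on the Bernoulli sum gives $|K_j - \numExplore^{(\epoch)}| = O\bigl(\sqrt{\numExplore^{(\epoch)}\log(1/\confidence)}\bigr)$ with high probability; conditioning on $\{\xi_j^t\}_t$, the noise sum is sub-Gaussian with variance proxy $K_j \leq 2\numExplore^{(\epoch)}$ on the same event. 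For the corruption term, split $\xi_j^t\corruptionf_t(\explore_j) = (\gamma_{\epoch-1}/\dimension)\corruptionf_t(\explore_j) + (\xi_j^t - \gamma_{\epoch-1}/\dimension)\corruptionf_t(\explore_j)$. The first piece, divided by $\numExplore^{(\epoch)} = \gamma_{\epoch-1}\totalnum_\epoch/\dimension$, is bounded in magnitude by $\corruption_\epoch/\totalnum_\epoch$. The second is a martingale difference (since $\corruptionf_t$ is adapted to the pre-action history and $\expect{\xi_j^t \given H_t} = \gamma_{\epoch-1}/\dimension$) with per-step bound $1$ and conditional variance $\leq (\gamma_{\epoch-1}/\dimension)\corruption_\epoch$; Freedman's inequality together with an AM--GM split of $\sqrt{\corruption_\epoch \cdot \dimension\log(1/\confidence)/\gamma_{\epoch-1}}/\totalnum_\epoch$ yields a further $\corruption_\epoch/\totalnum_\epoch + O\bigl(\dimension\log(1/\confidence)/(\gamma_{\epoch-1}\totalnum_\epoch)\bigr)$, producing the $2\corruption_\epoch/\totalnum_\epoch$ in the statement.

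The main obstacle is showing that all remaining $O\bigl(\sqrt{\log(1/\confidence)/\numExplore^{(\epoch)}}\bigr)$ and $O\bigl(\dimension\log(1/\confidence)/(\gamma_{\epoch-1}\totalnum_\epoch)\bigr)$ terms are at most $\empiricaldiff^{(\epoch-1)}/(32\dimension^2)$. This reduces to $\gamma_{\epoch-1}\totalnum_\epoch \gtrsim \dimension^5\log(1/\confidence)/(\empiricaldiff^{(\epoch-1)})^2$. Using $\totalnum_\epoch \geq \parameter(\empiricaldiff^{(\epoch-1)})^{-2}$ with $\parameter = 2^{14}\dimension^6\log(4\dimension\log\horizon/\confidence)$, together with $\empiricaldiff^{(\epoch-1)} \geq 2^{-(\epoch-1)}$ and $\gamma_{\epoch-1} = (\empiricaldiff^{(\epoch-1)})^{-2}/((\empiricaldiff^{(\epoch-1)})^{-2} + 4^\epoch)$, the inequality holds with constant slack in both the ``$\gamma_{\epoch-1}$-large'' regime (where $(\empiricaldiff^{(\epoch-1)})^{-2}$ dominates $4^\epoch$) and the ``$\gamma_{\epoch-1}$-small'' regime (where the $n_\epoch = \parameter\cdot 4^\epoch$ summand in $\totalnum_\epoch$ takes over). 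A final union bound over the $\dimension M \leq \dimension\log\horizon$ coordinate-epoch pairs at per-event failure probability $\confidence/(\dimension M)$---which lines up with the $\log(4\dimension\log\horizon/\confidence)$ factor absorbed into $\parameter$---finishes the proof.
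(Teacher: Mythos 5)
Your proof is correct and follows essentially the same route as the paper's: a Chernoff--Hoeffding bound for the stochastic (count-plus-noise) part, a split of the corruption term into its conditional mean (giving $\corruption_\epoch/\totalnum_\epoch$) plus a martingale handled by Freedman's inequality (giving the second $\corruption_\epoch/\totalnum_\epoch$ and the $\log$ term), and the same parameter arithmetic $\numExplore^{(\epoch)} = \gamma_{\epoch-1}\totalnum_\epoch/\dimension = (\parameter/\dimension)(\empiricaldiff^{(\epoch-1)})^{-2}$ followed by a union bound over the $\dimension\log\horizon$ coordinate--epoch pairs. The only cosmetic differences are that you separate the realized-count deviation from the noise term (the paper bounds them jointly in one Hoeffding application) and that your two-regime check of the exploration budget can be replaced by the exact identity $\gamma_{\epoch-1}\totalnum_\epoch = \parameter(\empiricaldiff^{(\epoch-1)})^{-2}$.
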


\begin{proof}
Since the indicator $\xi_j^t$ 
and the noise $\noise_t$ are independent random variables, by a form of the Chernoff-Hoeffding bound in \citet{hoeffding1963probability}, we have for any deviation $\kappa$ and any $j \in [\dimension]$
\begin{align}\label{eq:concentration1}
\prob{\abs{\frac{1}{\numExplore^{(\epoch)}}
\sum_{t= \horizon_{m-1} + 1}^{\horizon_m} \xi_{j}^t \cdot \slr{\inner{\explore_j}{\hidden}+\noise_t} - \inner{\explore_j}{\hidden}
} 
\geq \frac{\kappa}{2}} 
\leq 2\exp\blr{-\frac{\kappa^2 \num_e^{(\epoch)}}{16}}.
\end{align}

For any $j \in [\dimension]$, let 
$X_t = \slr{\xi_j^t - \sfrac{\numExplore^{(\epoch)}}{\totalnum_\epoch}}
\corruptionf_t(\explore_j)$ 
for all $t$. 
Denote by $\blr{\mathcal{F}_t}_{t=1}^T$ 
the filtration generated by random variables 
$\blr{\xi_j^s}_{j \in [\dimension], s\leq t}$ 
and $\blr{\eta_s}_{s \leq t+1}$, and 
define $Y_t = \sum_{s=1}^t X_s$. 
Since $\xi_j^t$ is independent of the corruption level $\corruptionf_j^t$ 
conditional on $\mathcal{F}_{t-1}$, 
$\blr{Y_t}_{t=1}^T$ yields a martingale with respect to the filtration $\blr{\mathcal{F}_t}$. 
The variance of $X_t$ conditional on $\mathcal{F}_{t-1}$ can be bounded as
\begin{align}
\label{eq:variance bound}
V = \expect{X_t^2|\mathcal{F}_{t-1}} \leq \sum_{t = \horizon_{\epoch-1}+1}^{\horizon_\epoch} \abs{\corruptionf_t(\explore_j)}\variance{\xi^t_j} 
\leq \frac{\numExplore^{(\epoch)}}{\totalnum_\epoch} \sum_{t=\horizon_{\epoch-1}+1}^{\horizon_\epoch} \abs{\corruptionf_t(\explore_j)}.
\end{align}
The first inequality holds because 
$\abs{\corruptionf_t(\explore_j)} \leq 1$,
and the second inequality holds because 
$\variance{\xi^t_j} \leq
\frac{\numExplore^{(\epoch)}}{\totalnum_\epoch}$. 
Using a Freedman-type concentration inequality for martingales \citep{beygelzimer2011contextual}, 
we have for any $\nu > 0$, 
\begin{align*}
\prob{
\frac{1}{\numExplore^{\epoch}}\sum_{t=\horizon_{\epoch-1}+1}^{\horizon_\epoch} X_t 
\geq 
\frac{V+ \ln\sfrac{4}{\nu}}{\numExplore^{(\epoch)}}} 
\leq \frac{\nu}{4}.
\end{align*}
Note that 
$\frac{1}{\numExplore^{\epoch}}\sum_{t=\horizon_{\epoch-1}+1}^{\horizon_\epoch} X_t
= \frac{1}{\numExplore^{(\epoch)}}\sum_{t=\horizon_{\epoch-1}+1}^{\horizon_m} \xi_j^t c_t(s_j)
- \frac{1}{\totalnum_\epoch}\sum_{t=\horizon_{\epoch-1}+1}^{\horizon_m}  c_t(s_j)$. 
Combining it with Equation \eqref{eq:variance bound}, 
for any $\nu > 0$, 
we have
\begin{align*}
\prob{\frac{\sum_{t=\horizon_{\epoch-1}+1}^{\horizon_m} \xi_j^t c_t(s_j)}{\numExplore^{(\epoch)}} \geq 
\frac{2\corruption_\epoch}{\totalnum_\epoch}
+ \frac{\ln \sfrac{4}{\nu}}{\numExplore^{(\epoch)}}}
\leq 
\prob{
\frac{1}{\numExplore^{\epoch}}\sum_{t=\horizon_{\epoch-1}+1}^{\horizon_\epoch} X_t 
\geq 
\frac{V+ \ln\sfrac{4}{\nu}}{\numExplore^{(\epoch)}}} 
\leq \frac{\nu}{4}.
\end{align*}
For any $0 < \kappa < 1$, 
substituting $\nu = 4\exp\blr{-\frac{\kappa\numExplore^{(\epoch)}}{2}}$, we can get
\begin{align*}
    \prob{\frac{\sum_{t=\horizon_{\epoch-1}+1}^{\horizon_m} \xi_j^t c_t(s_j)}{\numExplore^{(\epoch)}} \geq 
    \frac{\kappa}{2} + \frac{2\corruption_\epoch}{\totalnum_\epoch}} \leq \exp\blr{-\frac{\kappa\numExplore^{(\epoch)}}{2}}.
\end{align*}
Similarly, consider the sequence $\blr{-X_t}$. Then, for any $0 < \kappa < 1$, we have 
\begin{align}\label{eq:concentration2}
\prob{\abs{\frac{\sum_{t=\horizon_{\epoch-1}+1}^{\horizon_m} \xi_j^t c_t(s_j)}{\numExplore^{(\epoch)}}}\geq \frac{\kappa}{2}+\frac{2\corruption_\epoch}{\totalnum_\epoch}}
\leq 2\exp\blr{-\frac{\kappa\numExplore^{(\epoch)}}{2}} 
\leq 2\exp\blr{-\frac{\kappa^2\numExplore^{(\epoch)}}{16}}.
\end{align}
Combining Inequalities \eqref{eq:concentration1} and \eqref{eq:concentration2} yields
\begin{align*}
    \prob{\abs{\reward_j^{(\epoch)}-\inner{\explore_j}{\hidden}}\geq \kappa+\frac{2\corruption_\epoch}{\totalnum_\epoch}} \leq 4\exp\blr{-\frac{\kappa^2\numExplore^{(\epoch)}}{16}}.
\end{align*}
Let $\kappa = \frac{\empiricaldiff^{(\epoch-1)}}{32\dimension^2} < 1$ and
$\parameter = 2^{14} \dimension^5 
\log (\sfrac{4\dimension \log \horizon}{\delta})$. 
Then 
$$\numExplore^{(\epoch)} 
= \frac{\parameter}{\dimension} (\empiricaldiff^{(\epoch-1)})^{-2}
= 2^{14} \dimension^4 (\empiricaldiff^{(\epoch-1)})^{-2}
\log (\sfrac{4\dimension \log \horizon}{\delta}),$$ 
and $\frac{\kappa^2\numExplore^{(\epoch)}}{16}
= \log\slr{\sfrac{4 \dimension \log \horizon}{\delta}}$.
It follows that
\begin{align*}
    &\prob{\abs{b^{(\epoch)}_j - b_j}\norm{\explore_j}^2_2 
    \geq \frac{2\corruption_\epoch}{\totalnum_\epoch} 
    + \frac{\empiricaldiff^{(\epoch-1)}}{32\dimension^2}}\\
    =\, & \prob{\abs{\reward^{(\epoch)}_{j} 
    - \inner{\explore_j}{\hidden} 
    }
    \geq \frac{2\corruption_\epoch}{\totalnum_\epoch}+
    \frac{\empiricaldiff^{(\epoch-1)}}{32\dimension^2}
    }
    \leq \frac{\confidence}{\dimension \log{\horizon}},
\end{align*}
where the first equality holds because
by the definition of $b^{(\epoch)}_j$
and $b_j$, 
$b^{(\epoch)}_j \norm{\explore_j}^2_2 
= \reward^{(\epoch)}_{j}$, 
and $b_j \norm{\explore_j}^2_2 
= \inner{\explore_j}{\hidden}$. 
By applying the union bound for all $j\in [\dimension]$ and epoch $\epoch \in [M]$, 
we obtain the desired result.
\end{proof}

\begin{lemma}
\label{lem:confidence interval}
With probability at least $1-\confidence$, we have
\begin{align*}
    \abs{\inner{x}{\empirical^{(\epoch)}-\hidden}} \leq \frac{4\dimension^{2}\corruption_\epoch}{\totalnum_\epoch} + \frac{\empiricaldiff^{(\epoch-1)}}{16}.
\end{align*}
for all epochs $\epoch \in [M]$ and all $x \in \xSet$.
\end{lemma}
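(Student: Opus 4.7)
The plan is to reduce the uniform bound over $x \in D$ to the per-coordinate bound already established in \Cref{lem:estimate theta}, exploiting the fact that the exploration set $S$ consists of mutually orthogonal principal axes of the L\"owner--John ellipsoid.

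First, I would fix an arbitrary $x \in D$ and expand it in the basis $S$ using \Cref{cor:johnellipse}: write $x = \sum_{j=1}^{\dimension} \ha_j \explore_j$ with $|\ha_j| \le 2\dimension$. Similarly, by the construction in \Cref{sec:parameter}, $\hat\hidden^{(\epoch)} - \hidden = \sum_{j=1}^{\dimension} (\hat b_j^{(\epoch)} - b_j) \explore_j$. Taking the inner product and using orthogonality of the principal axes (so $\inner{\explore_j}{\explore_k} = 0$ for $j \ne k$), the cross terms vanish and we obtain
\begin{align*}
\inner{x}{\hat\hidden^{(\epoch)} - \hidden}
= \sum_{j=1}^{\dimension} \ha_j\, (\hat b_j^{(\epoch)} - b_j)\, \norm{\explore_j}_2^2.
\end{align*}

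Next I would apply the triangle inequality, the coefficient bound $|\ha_j| \le 2\dimension$, and the per-coordinate estimate from \Cref{lem:estimate theta} (which holds simultaneously over all $j \in [\dimension]$ and all $\epoch \in [M]$ on a single event of probability at least $1 - \confidence$). This yields
\begin{align*}
\abs{\inner{x}{\hat\hidden^{(\epoch)} - \hidden}}
\le \sum_{j=1}^{\dimension} 2\dimension \left(\frac{2\corruption_\epoch}{\totalnum_\epoch} + \frac{\empiricaldiff^{(\epoch-1)}}{32\dimension^2}\right)
= \frac{4\dimension^2 \corruption_\epoch}{\totalnum_\epoch} + \frac{\empiricaldiff^{(\epoch-1)}}{16},
\end{align*}
which is exactly the target inequality. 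Since the bound on the right-hand side does not depend on the particular $x$ chosen, the conclusion holds uniformly over $x \in \xSet$ on the same good event.

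There is no real obstacle here: both the factor $d$ from $|\ha_j| \le 2d$ and the factor $d$ from summing $d$ coordinates are already absorbed into the $d^2$ coefficient in front of $\corruption_\epoch / \totalnum_\epoch$, and the $32 d^2$ denominator in \Cref{lem:estimate theta} was chosen precisely so that after the $2d \cdot d$ blow-up one recovers $\empiricaldiff^{(\epoch-1)}/16$. The only point that needs to be highlighted is the role of orthogonality of $S$: without it, the cross terms $\ha_j(\hat b_k^{(\epoch)} - b_k)\inner{\explore_j}{\explore_k}$ would contribute $O(d^2)$ additional summands of the same magnitude, inflating the bound by another factor of $d$. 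This is the reason the algorithm insists on exploring along the principal axes of the L\"owner--John ellipsoid rather than an arbitrary spanning set.
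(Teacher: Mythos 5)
Your proof is correct and follows essentially the same route as the paper: expand $x$ in the orthogonal basis $S$ via \Cref{cor:johnellipse}, use orthogonality to reduce $\inner{\explore_j}{\empirical^{(\epoch)}-\hidden}$ to $(\hat b_j^{(\epoch)}-b_j)\norm{\explore_j}_2^2$, and then apply the triangle inequality together with the uniform per-coordinate bound of \Cref{lem:estimate theta}. The arithmetic ($2\dimension$ coefficient bound times $\dimension$ coordinates absorbing into the $4\dimension^2$ and $\sfrac{1}{16}$ constants) matches the paper exactly.
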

\begin{proof}
Since the exploration set $\exploreSet$ is an orthogonal set, for any context $x$, 
there exists multipliers $\{\ha_j\}_{j\in [d]}$ such that 
\begin{align*}
    \abs{\inner{x}{\empirical^{(\epoch)}-\hidden}} 
    = \sum_{j=1}^\dimension 
    \abs{\ha_j \inner{\explore_j}{\empirical^{(\epoch)}-\hidden}} 
    \leq \sum_{j=1}^\dimension 
    \abs{\ha_j} \abs{b^{(\epoch)}_j-b_j}\norm{\explore_j}^2_2.
\end{align*}
Then~\Cref{cor:johnellipse} and~\Cref{lem:estimate theta} together imply, with probability $1-\delta$, that
\begin{equation*}
    \abs{\inner{x}{\empirical^{(\epoch)}-\hidden}} 
    \leq \frac{4\dimension^{2}\corruption_\epoch}{\totalnum_\epoch} 
    + \frac{ \empiricaldiff^{(\epoch-1)}}{16}. \qedhere
\end{equation*}
\end{proof}
For simplicity, we denote
\begin{equation}\label{eq:beta}
\beta_\epoch = 
\frac{4\dimension^2 \corruption_\epoch}{\totalnum_\epoch}
+ \frac{\empiricaldiff^{(\epoch-1)}}{16}
\end{equation} 
and let $\event$ be the event that $\abs{\inner{x}{\empirical^{(\epoch)}-\hidden}} 
\leq \beta_\epoch ~\text{for all $\epoch$ and for all $\context$}$.
Note that event $\event$ happens with probability at least $1-\delta$.


\subsection{Bound analysis for estimated gap}
Let us now turn to provide the upper and lower bounds 
for the estimated gap $\empiricaldiff^{(\epoch)}$. 
Let $\bestimpirical \in 
\argmax_{\context \in \extreme}
\inner{\context}{\empirical^{(\epoch)}}$
be one of the actions 
that maximizes the expected reward given the estimate~$\empirical^{(\epoch)}$. 
We also define 
$\extreme^{\epoch-} = \extreme \backslash \{\bestimpirical\}$, 
and let the second best action 
given $\empirical^{(\epoch)}$ be 
$\secondimpirical = 
\argmax_{\context \in \extreme^{\epoch-}}
\inner{\context}{\empirical^{(\epoch)}}$. 
Since $\bestimpirical$ may not be unique, 
the expected reward for $\bestimpirical$ 
and $\secondimpirical$ may coincide.
Then the estimated gap in epoch $\epoch$ corresponds to
$\empiricaldiff^{(\epoch)} = 
\max\{ 2^{-\epoch}, 
\inner{\bestimpirical - \secondimpirical}{\empirical^{(\epoch)}}
\}$.

\begin{lemma}[Upper Bound for $\empiricaldiff^{(\epoch)}$]
\label{lem:upper bound}
Suppose that event $\event$ happens, 
then for all epochs $\epoch\geq 1$ 
\begin{align*}
    \empiricaldiff^{(\epoch)} \leq 2\mlr{\truediff + 2^{-\epoch} + 4\dimension^{2}\sum_{s=1}^{\epoch} \slr{\frac{1}{8}}^{m-s}\frac{\corruption_s}{\totalnum_s}}.
\end{align*}
\end{lemma}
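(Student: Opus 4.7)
The plan is to prove the bound by induction on $\epoch$. Under event $\event$, Lemma~\ref{lem:confidence interval} supplies the uniform confidence bound $\abs{\inner{x}{\empirical^{(\epoch)} - \hidden}} \leq \beta_\epoch$ for every $x \in \xSet$; the substance of the argument is to convert this into a bound on $\empiricaldiff^{(\epoch)}$ and then to unfold the implicit recursion built into $\beta_\epoch$ via \eqref{eq:beta}.

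The central single-epoch step is to show $\empiricaldiff^{(\epoch)} \leq 2^{-\epoch} + \truediff + 2\beta_\epoch$. Since $\empiricaldiff^{(\epoch)}$ is defined as a maximum, it suffices to bound $\inner{\bestimpirical - \secondimpirical}{\empirical^{(\epoch)}}$ by $\truediff + 2\beta_\epoch$, which I would do by splitting on whether $\bestimpirical = \bestcontext$. If yes, then $\secondbest \in \extreme^{\epoch-}$, so by the maximality of $\secondimpirical$ on that set and the confidence bound, $\inner{\secondimpirical}{\empirical^{(\epoch)}} \geq \inner{\secondbest}{\empirical^{(\epoch)}} \geq \inner{\secondbest}{\hidden} - \beta_\epoch$, and combining with $\inner{\bestcontext}{\empirical^{(\epoch)}} \leq \inner{\bestcontext}{\hidden} + \beta_\epoch$ yields the required $\truediff + 2\beta_\epoch$. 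If no, then $\bestcontext \in \extreme^{\epoch-}$, hence $\inner{\secondimpirical}{\empirical^{(\epoch)}} \geq \inner{\bestcontext}{\empirical^{(\epoch)}}$; together with $\inner{\bestimpirical}{\hidden} \leq \inner{\bestcontext}{\hidden}$ and two more applications of the confidence bound, this gives the even tighter bound of $2\beta_\epoch$. Either way the claimed one-step inequality holds.

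It then remains to substitute $2\beta_\epoch = \frac{8\dimension^2 \corruption_\epoch}{\totalnum_\epoch} + \frac{\empiricaldiff^{(\epoch-1)}}{8}$ and invoke the inductive hypothesis on $\empiricaldiff^{(\epoch-1)}$. The factor $\sfrac{1}{8}$ multiplying the previous epoch's bound is exactly the ratio appearing in the claim, so the geometric sum telescopes cleanly, with $(\sfrac{1}{8})^{(\epoch-1)-s}/8 = (\sfrac{1}{8})^{\epoch-s}$ and the corruption term at $s = \epoch$ slotting in with coefficient $8\dimension^2$. The constant residues $\sfrac{\truediff}{4}$ and roughly $\sfrac{2^{-\epoch}}{2}$ dropped out by the $\sfrac{1}{8}$ scaling are small enough to be absorbed into the leading prefactor $2$ once added to the $\truediff + 2^{-\epoch}$ sitting outside $2\beta_\epoch$. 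The base case $\epoch = 1$ uses $\empiricaldiff^{(0)} = 1$ from the algorithm's initialization and verifies directly with slack.

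The main obstacle I foresee is purely bookkeeping: ensuring that the accumulated constants of $\sfrac{1}{4}$, $\sfrac{1}{8}$, and $\sfrac{1}{2}$ emerging from the unfolding exactly fit inside the $2\truediff$ and $2\cdot 2^{-\epoch}$ slack allotted by the claim, and confirming that the two-case analysis really does cover $\bestimpirical$ coinciding or not with $\bestcontext$ without an off-by-one in the definition of $\extreme^{\epoch-}$. No new probabilistic tool is needed downstream of Lemma~\ref{lem:confidence interval}; everything from that point on is deterministic algebra under $\event$.
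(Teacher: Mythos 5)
Your proof is correct and takes essentially the same route as the paper: establish the one-step bound $\empiricaldiff^{(\epoch)} \leq \truediff + 2\estimationdiff_\epoch + 2^{-\epoch}$ from \Cref{lem:confidence interval} via a case analysis, then unroll the recursion using $\estimationdiff_\epoch = \frac{4\dimension^2\corruption_\epoch}{\totalnum_\epoch} + \frac{\empiricaldiff^{(\epoch-1)}}{16}$, with the $\sfrac{1}{8}$ geometric factor and the prefactor $2$ absorbing the residual constants exactly as you describe. The only cosmetic difference is that you split on whether $\bestimpirical = \bestcontext$ while the paper splits on whether $\secondbest = \bestimpirical$; both case analyses compare $\secondimpirical$ against an explicit element of $\extreme^{\epoch-}$ and yield the same intermediate bounds.
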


\begin{proof}
First note that $\empiricaldiff^{(\epoch)} = 2^{-\epoch}$ whenever 
$\inner{\bestimpirical - \secondimpirical}{\empirical^{(\epoch)}} \leq 2^{-\epoch}$; 
otherwise we have a unique expected reward-maximizing action for estimate $\empirical^{(\epoch)}$.
By the uniqueness of $\bestcontext$, we have
$\inner{\bestimpirical-\bestcontext}{\hidden} \leq 0$, which implies that
\begin{align*}
\inner{\bestimpirical - \secondbest}{\hidden} 
= \inner{\bestimpirical - \bestcontext}{\hidden} 
+ \inner{\bestcontext - \secondbest}{\hidden} 
\leq \truediff,
\end{align*}
because $\inner{\bestcontext - \secondbest}{\hidden} 
\leq \truediff$. 
For the case $\secondbest \not= \bestimpirical$, 
we have $\secondbest \in \extreme^{\epoch-}$. It follows that $\inner{\secondbest-\secondimpirical}{\empirical^{(\epoch)}} \leq 0$, and therefore
\begin{align*}
    \empiricaldiff^{(\epoch)} = 
    \inner{\bestimpirical}{\empirical^{(\epoch)}-\hidden} 
    & + \inner{\bestimpirical - \secondbest}{\hidden}\\ 
    & + \inner{\secondbest}{\hidden-\empirical^{(\epoch)}} 
    + \inner{\secondbest-\secondimpirical}{\empirical^m} \leq \truediff + 2 \estimationdiff_\epoch.
\end{align*}
The last inequality follows from Lemma \ref{lem:confidence interval} because when the event $\event$ occurs, 
both inequalities 
$\abs{\inner{\bestimpirical}{\empirical^{(\epoch)}-\hidden}} \leq \beta_\epoch$ and 
$\abs{\inner{\secondbest}{\hidden-\empirical^{(\epoch)}}} \leq \beta_\epoch$ are satisfied.

Now for the case $\secondbest = \bestimpirical$, it is straightforward to see $\bestcontext \not = \bestimpirical$ from the fact $\bestcontext \neq \secondbest$. 
This implies that the expected reward of $\secondimpirical$ 
given the estimate $\empirical^{(\epoch)}$ is at least as large as that of $\bestcontext$;
i.e., 
$\inner{\bestcontext - \secondimpirical}{\empirical^{(\epoch)}} \leq 0$. 
Therefore
\begin{align*}
    \empiricaldiff^{(\epoch)} &\leq 
    \inner{\bestimpirical-\bestcontext}{\empirical^{(\epoch)}} \\
    &= \inner{\bestimpirical}{\empirical^{(\epoch)}-\hidden} 
    + \inner{\bestimpirical-\bestcontext}{\hidden} 
    + \inner{\bestcontext}{\hidden-\empirical^{(\epoch)}}
    \leq -\truediff + 2 \estimationdiff_\epoch.
\end{align*}
Combining all cases yields 
\begin{align}\label{eq:recurive}
\empiricaldiff^{(\epoch)} \leq \truediff + 2\estimationdiff_\epoch + 2^{-\epoch}.
\end{align}
Given the initial assignment $\empiricaldiff^{(0)} = 1$, 
we know that 
$\empiricaldiff^{(1)} \leq \truediff + \frac{8\dimension^2 \corruption_1}{\totalnum_1}
+ \frac{1}{8} + \frac{1}{2}$, 
satisfying \Cref{lem:upper bound}. 
Applying inequality \eqref{eq:recurive} recursively, 
we thus obtain 
\begin{align*}
    \empiricaldiff^{(\epoch)} 
    &\leq 
    \truediff 
    + \frac{8\dimension^{2} \corruption_\epoch}{\totalnum_\epoch} 
    + 2^{-\epoch}
    + \frac{1}{8} \empiricaldiff^{(\epoch-1)}
    \\
    &\leq \truediff 
    + \frac{8\dimension^{2} \corruption_\epoch}{\totalnum_\epoch} 
    + 2^{-\epoch}
    + \frac{1}{8}\mlr{\truediff + 2^{-(\epoch-1)}
    + 4\dimension^{2}\sum_{s=1}^{\epoch-1} 
    \slr{\frac{1}{8}}^{m-1-s}\frac{\corruption_s}{\totalnum_s}} \\
    &\leq 2\mlr{\truediff + 2^{-\epoch} 
    + 4\dimension^{2}\sum_{s=1}^{\epoch} 
    \slr{\frac{1}{8}}^{m-s}\frac{\corruption_s}{\totalnum_s}}. \qedhere
\end{align*} 
\end{proof}

\begin{lemma}[Lower Bound for $\empiricaldiff^{(\epoch)}$]
\label{lem:lower bound}
Suppose that event $\event$ happens, 
then for all epochs $\epoch$
\begin{align*}
    \empiricaldiff^{(\epoch)} 
    \geq \frac{\truediff}{2} 
    - 2^{-\epoch-1} 
    - 8\dimension^{2}\sum_{s=1}^{\epoch} 
    \slr{\frac{1}{8}}^{m-s}
    \frac{\corruption_s}{\totalnum_s}.
\end{align*}
\end{lemma}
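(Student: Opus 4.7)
The plan is to establish the bound via a case split on whether the estimated best action $\bestimpirical$ coincides with the true optimum $\bestcontext$, mirroring the proof of \Cref{lem:upper bound}. The key ingredients are the two trivial lower bounds $\empiricaldiff^{(\epoch)} \geq 2^{-\epoch}$ and $\empiricaldiff^{(\epoch)} \geq \inner{\bestimpirical - \secondimpirical}{\empirical^{(\epoch)}}$, together with the fact that under event $\event$ every inner product of the form $\inner{x}{\empirical^{(\epoch)} - \hidden}$ is bounded by $\beta_\epoch$ in absolute value.

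Suppose first that $\bestimpirical = \bestcontext$, so $\extreme^{\epoch-} = \extreme^-$. By definition of $\secondbest$ as the best element of $\extreme^-$, every $x \in \extreme^{\epoch-}$ satisfies $\inner{\bestcontext - x}{\hidden} \geq \truediff$. Applying \Cref{lem:confidence interval} to both $\bestimpirical$ and $\secondimpirical$, and then specializing to $x = \secondimpirical$, I obtain $\inner{\bestimpirical - \secondimpirical}{\empirical^{(\epoch)}} \geq \truediff - 2\beta_\epoch$, hence $\empiricaldiff^{(\epoch)} \geq \truediff - 2\beta_\epoch$. Suppose instead that $\bestimpirical \neq \bestcontext$, so $\bestimpirical \in \extreme^-$. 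This yields $\inner{\bestimpirical - \bestcontext}{\hidden} \leq -\truediff$, while the optimality of $\bestimpirical$ under $\empirical^{(\epoch)}$ gives $\inner{\bestimpirical - \bestcontext}{\empirical^{(\epoch)}} \geq 0$; subtracting and applying \Cref{lem:confidence interval} forces $\truediff \leq 2\beta_\epoch$, and I will fall back on $\empiricaldiff^{(\epoch)} \geq 2^{-\epoch}$ to conclude.

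It then remains to unfold $\beta_\epoch = 4\dimension^2 \corruption_\epoch/\totalnum_\epoch + \empiricaldiff^{(\epoch-1)}/16$ by substituting the upper bound on $\empiricaldiff^{(\epoch-1)}$ from \Cref{lem:upper bound}. Since that bound carries weights $(1/8)^{(\epoch-1)-s} = 8\,(1/8)^{\epoch-s}$, pushing this through the factor $1/16$ produces
\begin{align*}
2\beta_\epoch \leq \frac{\truediff}{4} + 2^{-\epoch-1} + 8\dimension^2 \sum_{s=1}^{\epoch}\slr{\frac{1}{8}}^{\epoch-s}\frac{\corruption_s}{\totalnum_s},
\end{align*}
where the $s=\epoch$ term of the sum absorbs the $4\dimension^2 \corruption_\epoch/\totalnum_\epoch$ contribution of $\beta_\epoch$ itself. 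In the first case, $\truediff - 2\beta_\epoch$ becomes $3\truediff/4 - 2^{-\epoch-1} - 8\dimension^2 \sum_{s=1}^{\epoch}(1/8)^{\epoch-s}\corruption_s/\totalnum_s$, which exceeds the target since $3\truediff/4 \geq \truediff/2$. In the second case, the same unfolding applied to $\truediff \leq 2\beta_\epoch$ rearranges to show that the right-hand side of the claim is non-positive, so $\empiricaldiff^{(\epoch)} \geq 2^{-\epoch} \geq 0$ suffices.

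The main obstacle is the algebraic bookkeeping: shifting the $1/8$ exponents from the recursion for $\empiricaldiff^{(\epoch-1)}$ into the target form for epoch $\epoch$ requires checking that the factor $8$ arising from the reindexing combines cleanly with the $1/16$ inside $\beta_\epoch$ to reproduce precisely the constants $8\dimension^2$ and $2^{-\epoch-1}$ in the statement, and that the $\truediff/4$ term in the unfolded $2\beta_\epoch$ is exactly the slack needed to descend from $3\truediff/4$ down to the $\truediff/2$ of the claim.
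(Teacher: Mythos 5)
Your proof is correct and follows essentially the same route as the paper's: derive $\empiricaldiff^{(\epoch)} \geq \truediff - 2\estimationdiff_\epoch$ (or show the right-hand side is non-positive) by comparing $\bestimpirical,\secondimpirical$ against $\bestcontext,\secondbest$ under event $\event$, then unfold $\estimationdiff_\epoch$ via the upper bound of \Cref{lem:upper bound} on $\empiricaldiff^{(\epoch-1)}$, with the same constants. The only cosmetic difference is that you split solely on whether $\bestimpirical = \bestcontext$, which cleanly subsumes the paper's separate handling of the non-unique-maximizer case.
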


\begin{proof}
We consider first the case that the best action for $\empirical^{(\epoch)}$ is unique. 

If $\bestimpirical \neq \bestcontext$, 
we know that $\inner{\bestimpirical - \secondbest}{\hidden} \leq 0$, 
and thus 
\begin{align*}
\truediff 
&= \inner{\bestcontext - \secondbest}{\hidden}
\leq \inner{\bestcontext - \bestimpirical}{\hidden} \\
&= \inner{\bestcontext - \bestimpirical}{\empirical^{(\epoch)}} 
+ \inner{\bestcontext - \bestimpirical}{\hidden - \empirical^{(\epoch)}} 
\leq 2\estimationdiff_\epoch 
\end{align*}
as the term $\inner{\bestcontext - \bestimpirical}{\empirical^{(\epoch)}}$
is always negative. 

If $\bestimpirical = \bestcontext$, 
then $\secondimpirical \neq \bestcontext$. It follows that 
\begin{align*}
\truediff 
&= \inner{\bestimpirical - \secondbest}{\hidden}
\leq \inner{\bestimpirical - \secondimpirical}{\hidden} \\
&= \inner{\bestimpirical - \secondimpirical}{\empirical^{(\epoch)}} 
+ \inner{\bestimpirical - \secondimpirical}{\hidden - \empirical^{(\epoch)}} \\
& \leq \empiricaldiff^{(\epoch)}
+ 2\estimationdiff_\epoch,  
\end{align*}
where the last inequality holds
because $\inner{\bestimpirical - \secondimpirical}{\empirical^{(\epoch)}} 
\leq \empiricaldiff^{(\epoch)}$. When the best action given $\empirical^{(\epoch)}$ is unique, we have
\begin{align*}
\empiricaldiff^{(\epoch)} \geq \truediff 
- 2\estimationdiff_\epoch.  
\end{align*}

For the case that the best action is not unique, 
let $\bestimpirical \neq \bestcontext$ be the best action given $\empirical^{(\epoch)}$. 
Then $\inner{\bestcontext-\bestimpirical}{\empirical^{(\epoch)}} \leq 0$,
giving that
\begin{align*}
    \truediff &\leq \inner{\bestcontext - \bestimpirical}{\hidden} 
    = \inner{\bestcontext}{\hidden - \empirical^{(\epoch)}} 
    + \inner{\bestcontext-\bestimpirical}{\empirical^{(\epoch)}} 
    + \inner{\bestimpirical}{\empirical^{(\epoch)}-\hidden}
    \leq 2\estimationdiff_\epoch.
\end{align*}
Now
\begin{align*}
    \empiricaldiff^{(\epoch)} \geq 0 \geq \truediff - 2\estimationdiff_\epoch.
\end{align*}
By applying the upper bound for $\empiricaldiff^{(\epoch-1)}$ 
in \Cref{lem:upper bound}, 
we thus get
\begin{align*}
    \empiricaldiff^{(\epoch)} 
    & \geq \truediff - 2
    \left(\frac{4\dimension^2 \corruption_\epoch}
    {\totalnum_\epoch}
    + \frac{\empiricaldiff^{(\epoch-1)}}{16}
    \right) \\
    &\geq 
    \truediff - \frac{8\dimension^{2} \corruption_\epoch}{\totalnum_\epoch} 
    - \frac{1}{4}\mlr{\truediff + 2^{-(\epoch-1)}
    + 4\dimension^{2}\sum_{s=1}^{\epoch-1} \slr{\frac{1}{8}}^{m-1-s}\frac{\corruption_s}{\totalnum_s}}\\
    &\geq \frac{\truediff}{2} 
    - 2^{-\epoch-1}
    - 8\dimension^{2}\sum_{s=1}^{\epoch} \slr{\frac{1}{8}}^{m-s}\frac{\corruption_s}{\totalnum_s}. \qedhere
\end{align*}
\end{proof}

\section{Regret estimation}
\label{sec:regret}

\begin{theorem}
\label{thm:final regret}
 With probability at least $1-\confidence$, the regret is bounded by 
\begin{align*}
    \regret = \bigO{
    \frac{\dimension^{2} \corruption \log \horizon}{\truediff} 
    + \frac{\dimension^5 
    \log (\sfrac{\dimension \log \horizon}{\delta})\log \horizon}
    {\truediff^2}}.
\end{align*}
\end{theorem}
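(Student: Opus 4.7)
The plan is to condition on the event $\event$ from \Cref{lem:confidence interval}, which holds with probability at least $1-\confidence$, and decompose the pseudo-regret across the $M = O(\log \horizon)$ epochs into exploration and exploitation contributions. In epoch $\epoch$ the expected number of exploration steps equals $\gamma_{\epoch-1}\totalnum_\epoch = \zeta(\empiricaldiff^{(\epoch-1)})^{-2}$ by the parameter choices in \Cref{alg:explobycoor}, and each such step incurs per-step regret at most $2$. In an exploitation step the chosen action $\context_*^{(\epoch-1)}$ yields per-step regret $\inner{\bestcontext-\context_*^{(\epoch-1)}}{\hidden}$, which is $0$ whenever $\context_*^{(\epoch-1)}=\bestcontext$ and otherwise lies in $[\truediff,2]$. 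The key deterministic consequence of \Cref{lem:confidence interval}, combined with the argmax property of $\context_*^{(\epoch-1)}$, is that $\context_*^{(\epoch-1)}\neq\bestcontext$ forces $\truediff\leq 2\beta_{\epoch-1}$, so exploitation regret is incurred only in epochs with $\beta_{\epoch-1}\geq\truediff/2$.

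To bound the exploration contribution I split on the lower bound of \Cref{lem:lower bound}: whenever both $2^{-\epoch}$ and the accumulated corruption term are at most $\truediff/8$, the lemma gives $\empiricaldiff^{(\epoch-1)}\geq\truediff/4$ so $(\empiricaldiff^{(\epoch-1)})^{-2}\leq 16/\truediff^2$, and the at most $M$ such epochs contribute $O(\zeta\log\horizon/\truediff^2)$. The small-$\epoch$ epochs with $2^{-\epoch}>\truediff/8$ number only $O(\log 1/\truediff)$ and, using the trivial bound $\empiricaldiff^{(\epoch-1)}\geq 2^{-(\epoch-1)}$, contribute a geometric sum of $O(\zeta/\truediff^2)$. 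For the remaining epochs, in which the term $8\dimension^{2}\sum_{s=1}^{\epoch-1}(1/8)^{\epoch-1-s}\corruption_s/\totalnum_s$ dominates, I swap the order of summation and use $\totalnum_s\geq\zeta\cdot 4^s$; the amplification factor $4^{\epoch-1}(1/8)^{\epoch-1-s}/\totalnum_s$ telescopes in $\epoch$ to $O(1/\totalnum_s)$, charging the entire contribution to $O(\dimension^2\corruption/\truediff)$.

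The exploitation contribution is at most $2\totalnum_\epoch$ in each epoch with $\beta_{\epoch-1}\geq\truediff/2$, and this splits into two subcases via the definition $\beta_{\epoch-1}=4\dimension^2\corruption_{\epoch-1}/\totalnum_{\epoch-1}+\empiricaldiff^{(\epoch-2)}/16$. If $4\dimension^2\corruption_{\epoch-1}/\totalnum_{\epoch-1}\geq\truediff/4$ then $\totalnum_{\epoch-1}\leq 16\dimension^2\corruption_{\epoch-1}/\truediff$ and $\totalnum_\epoch=O(\totalnum_{\epoch-1})$, so summing over epochs yields $O(\dimension^2\corruption/\truediff)$. Otherwise $\empiricaldiff^{(\epoch-2)}\geq 4\truediff$; by the upper bound of \Cref{lem:upper bound}, this forces either $\epoch=O(\log 1/\truediff)$ (contributing $O(\zeta/\truediff^2)$ by the same geometric summation as in the exploration analysis) or a large cumulative corruption term (handled by the identical weighted-sum exchange-of-summation argument and contributing $O(\dimension^2\corruption/\truediff)$). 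Combining the exploration and exploitation contributions and substituting $\zeta=\Theta(\dimension^5\log(\dimension\log\horizon/\confidence))$ yields the stated regret.

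The main obstacle is the bookkeeping required to keep the cumulative corruption terms under control as they propagate through the recursive estimates in \Cref{lem:upper bound} and \Cref{lem:lower bound}. A single corruption burst $\corruption_s$ in epoch $s$ contaminates both the lower bound on $\empiricaldiff^{(\epoch-1)}$ (which inflates the exploration count) and the upper bound on $\empiricaldiff^{(\epoch-2)}$ (which can make $\beta_{\epoch-1}\geq\truediff/2$); the residual weight decays as $(1/8)^{\epoch-s}$ but is multiplied against epoch lengths $\totalnum_\epoch\sim 4^\epoch$. The exchange-of-summation argument, using $\totalnum_s\geq\zeta\cdot 4^s$ to absorb the $4^\epoch$ growth against $(1/8)^{\epoch-s}$, is what prevents accumulation and yields the linear-in-$\corruption$ dependence $O(\dimension^2\corruption/\truediff)$.
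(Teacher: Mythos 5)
Your proposal is correct and follows essentially the same route as the paper's proof: condition on the event $\event$, split each epoch's pseudo-regret into exploration (controlled via the lower bound on $\empiricaldiff^{(\epoch)}$ from \Cref{lem:lower bound}) and exploitation (controlled via $\truediff \leq 2\estimationdiff_{\epoch-1}$ together with the upper bound from \Cref{lem:upper bound}), and absorb the corruption terms by exchanging the order of summation against $\totalnum_s \geq \parameter \cdot 4^s$. The differences are only in bookkeeping — your case split on which term of $\estimationdiff_{\epoch-1}$ dominates versus the paper's direct recursion on $\truediff^{(\epoch)}$, and your weighted-sum treatment of the corruption-dominated exploration epochs, which in fact avoids the extra $\log \horizon$ the paper incurs on the $\dimension^{2}\corruption/\truediff$ term.
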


\begin{proof}
Let $\regret_1^{(\epoch)}$ and $\regret_2^{(\epoch)}$ be the pseudo regret
for exploitation and exploration in epoch $\epoch$ respectively. 
By \Cref{lem:confidence interval}, 
the event $\event$ occurs with probability $1-\delta$.
We propose first the pseudo regret bound for exploitation given the occurrence of $\event$.

\noindent\textbf{Exploitation:} The pseudo regret for exploitation 
in epoch $\epoch$ is $\regret_1^{(\epoch)} = \num_\epoch\truediff^{(\epoch)}$.

Let $\truediff^{(\epoch)} = \inner{\hidden}{\bestcontext - \context_*^{(\epoch-1)}}$ 
be the pseudo regret for the action $\context_*^{(\epoch-1)}$. 
Given that the event $\event$ happens, we have
\begin{align}\label{eq:bound truediff}
    \truediff^{(\epoch)} 
    = \inner{\hidden - \empirical^{(\epoch-1)}}{\bestcontext} 
    + \inner{\empirical^{(\epoch-1)}}{\bestcontext - \context_*^{(\epoch-1)}} 
    + \inner{\empirical^{(\epoch-1)}-\hidden}{\context_*^{(\epoch-1)}}
    \leq 2\estimationdiff_{\epoch-1}, 
\end{align}
because 
$\inner{\empirical^{(\epoch-1)}}
{\bestcontext - \context_*^{(\epoch-1)}}
\leq 0$. 
Define $\sumerror_{\epoch} 
= \dimension^{2}\sum_{s=1}^{\epoch} 
\slr{\frac{1}{8}}^{m-s}\frac{\corruption_s}{\totalnum_s}$. 
Then we can get
\begin{align}\label{eq:bound truediff 2}
    \truediff^{(\epoch)} \leq \, &
    \frac{8\dimension^{2}\corruption_{\epoch-1}}{\totalnum_{\epoch-1}} 
    + \frac{\empiricaldiff^{(\epoch-2)}}{8} \nonumber\\
    \leq \, & \frac{8\dimension^{2}\corruption_{\epoch-1}}{\totalnum_{\epoch-1}} 
    + \frac{1}{4}\slr{\truediff + 2^{-\epoch+2}+ 4\sumerror_{\epoch-2}} \nonumber\\
    = \, & \frac{\truediff}{4} 
    + 2^{-\epoch} + 8\sumerror_{\epoch-1},
\end{align}
where the first inequality holds by the definition of $\beta_\epoch$ 
and Inequality \eqref{eq:bound truediff}, 
and the second inequality holds by \Cref{lem:upper bound}. 

If $\truediff^{(\epoch)} = 0$, then the total regret for exploitation $\regret_1^{(\epoch)}$ is $0$; 
otherwise, we have $\truediff^{(\epoch)} \geq \truediff$. 
Now we consider two different cases. 

For the case $\truediff \geq 2^{-\epoch+1}$,
we have 
$\frac{\truediff}{4} + 2^{-\epoch}
\leq \frac{\truediff}{4} + \frac{\truediff}{2}
\leq \frac{3\truediff^{(\epoch)}}{4}$. 
Combining it with Inequality \eqref{eq:bound truediff 2},
we have 
$\truediff^{(\epoch)} \leq 32\sumerror_{\epoch-1}$. 
So, the pseudo regret $\regret_1^{(\epoch)}$ is 
\begin{align*}
    \regret_1^{(\epoch)} 
    = \num_\epoch\truediff^{(\epoch)} 
    \leq 32\parameter \cdot 4^\epoch \sumerror_{\epoch-1}.
\end{align*}

For the case $\truediff < 2^{-\epoch+1}$, 
by Inequality \eqref{eq:bound truediff 2},
we have 
$\truediff^{(\epoch)} \leq 
8 \sumerror_{\epoch-1} + 2^{-\epoch+1}$. 
It follows that
\begin{align*}
    \regret_1^{(\epoch)} = \num_\epoch\truediff^{(\epoch)} 
    \leq 8 \parameter \cdot 4^\epoch \sumerror_{\epoch-1} 
    + \parameter \cdot 2^{\epoch+1}
    \leq 8 \parameter \cdot 4^\epoch \sumerror_{\epoch-1} 
    + \frac{4\parameter}{\truediff}.
\end{align*}
Thus, for each epoch $\epoch$, 
\begin{align*}
    \regret_1^{(\epoch)} \leq 
    32\parameter \cdot 4^\epoch \sumerror_{\epoch-1} 
    + \frac{4\parameter}{\truediff}.
\end{align*}
Summing over all epochs yields
\begin{align}\label{eq:r1}
\regret_1 &\leq \frac{4\parameter M}{\truediff}
+ 32 \parameter \sum_{m=1}^M \sumerror_{\epoch-1} 4^\epoch 
\leq 
\frac{4\parameter M}{\truediff} 
+ 32 \parameter 
\sum_{m=1}^M \sum_{s=1}^m \frac{\corruption_s}{8^{\epoch-1-s}\totalnum_s}4^{\epoch} \nonumber\\
&\leq 
\frac{4\parameter M}{\truediff} 
+ 32 \sum_{m=1}^M 
\sum_{s=1}^m \corruption_s \cdot
\frac{4^{\epoch-s}}{8^{m-1-s}} \nonumber\\
&= 
\frac{4\parameter M}{\truediff} 
+ 32 \sum_{s=1}^M \corruption_s 
\sum_{m=s}^M \frac{4^{\epoch-s}}{8^{m-1-s}} 
\leq \frac{4\parameter M}{\truediff} + 512 C, 
\end{align}
where the third inequality holds because $\totalnum_s \geq 4^s$ by the construction of our algorithm. 

\noindent\textbf{Exploration:} 
Now we turn to the exploration part and propose a bound for the pseudo regret  $\regret_2^{(\epoch)}$ in each epoch~$\epoch$.
Note that the expected number of time steps in which
exploration is conducted is 
$\frac{\parameter}{(\empiricaldiff^{(\epoch)})^{2}}$,
and the pseudo regret for each of such time step is bounded above by 1. 

When $\truediff \leq 2^{1-\epoch}$, 
since $\empiricaldiff^{(\epoch)} \geq 2^{-\epoch}$, 
we have 
\begin{align*}
    \regret_2^{(\epoch)} \leq
\frac{\parameter}{(\empiricaldiff^{(\epoch)})^{2}}
\leq \frac{4\parameter}{\truediff^2}. 
\end{align*}

When $\truediff > 2^{1-\epoch}$, 
we again consider two cases.
For the case $\sumerror_{\epoch} \geq \frac{\truediff}{64}$, 
since $\totalnum_\epoch \geq \num_\epoch = \parameter \cdot 4^\epoch$ 
and because $\empiricaldiff^{(\epoch)} \geq 2^{-\epoch}$, 
we have $\totalnum_\epoch \leq \num_\epoch + \parameter (\empiricaldiff^{(\epoch)})^{-2} \leq 2\parameter \cdot 4^\epoch$, 
and 
\begin{align*}
    \frac{\truediff}{64} 
    \leq \sumerror_{\epoch} 
    = \dimension^{2}\sum_{s=1}^{\epoch}
    \slr{\frac{1}{8}}^{\epoch-s} \frac{C_s}{N_s} 
    \leq \frac{2\dimension^{2} 
    \sum_{s=1}^{\epoch} \corruption_s}{\totalnum_\epoch} 
    \leq \frac{2\dimension^{2} 
    \corruption}{\totalnum_\epoch}, 
\end{align*}
where the second inequality holds because 
$\slr{\frac{1}{8}}^{\epoch-s} \frac{1}{N_s} 
\leq \slr{\frac{1}{8}}^{\epoch-s} \frac{1}{\parameter 4^s} 
\leq \frac{1}{\parameter4^m} \leq \frac{2}{N_m}$. 
Thus, the total amount of corruption satisfies 
$\corruption \geq \frac{\totalnum_\epoch \truediff}{128 \dimension^{2}}$, 
which implies that $\regret_2^{(\epoch)} \leq \totalnum_\epoch 
\leq \frac{128 \dimension^{2}\corruption }{\truediff}$. 

For the case $\sumerror_\epoch < \frac{\truediff}{64}$, 
\Cref{lem:lower bound} ensures that $\empiricaldiff^{(\epoch)} \geq \frac{\truediff}{2} - 2^{-\epoch-1} - 8\sumerror_\epoch \geq
\frac{\truediff}{2} - \frac{\truediff}{4} - \frac{\truediff}{8}
=\frac{\truediff}{8}$. 
Then, the regret in epoch $\epoch$ for exploration is 
$\regret_2^{(\epoch)} \leq 
\frac{\parameter}{(\empiricaldiff^{(\epoch)})^{2}}
\leq \frac{64 \parameter}{\truediff^2}$. 

Therefore, the corresponding total pseudo regret for exploration is 
\begin{align}\label{eq:r2}
    \regret_2 = \sum_{m=1}^M \regret_2^{(\epoch)}
    \leq \sum_{m=1}^M 
    \frac{\parameter}{(\empiricaldiff^{(\epoch)})^{2}}
    \leq \frac{64 \parameter \log \horizon}{\truediff^2} 
    + \frac{128 \dimension^2 \corruption \log \horizon}{\truediff}.
\end{align}
Combining Inequalities \eqref{eq:r1} and \eqref{eq:r2}, 
and noting that 
$\parameter = \bigO{\dimension^5 
\log (\sfrac{4\dimension \log \horizon}{\delta})}$,
the total pseudo regret is
\begin{equation*}
    \regret = \regret_1+\regret_2 
    = \bigO{
    \frac{\dimension^{2} \corruption \log \horizon}{\truediff} 
    + \frac{\dimension^5 
    \log (\sfrac{\dimension \log \horizon}{\delta})\log \horizon}
    {\truediff^2}}. \qedhere
\end{equation*}
\end{proof}

\section{Computational efficiency}
\label{sec:efficiency}

According to ~\citet{lovasz1990geometric}, 
there exists a polynomial time algorithm for finding 
a weak L{\"o}wner-John ellipsoid 
$\ellipsoid \subseteq \xSet$. 
 
\begin{theorem}
\label{thm:john theorem}
For any bounded convex body $K \subseteq \reals^\dimension$,
there is a polynomial time algorithm that computes an ellipsoid $E$ satisfies
\begin{align*}
    \ellipsoid \subseteq K \subseteq 2\dimension^{\sfrac{3}{2}}\ellipsoid.
\end{align*}
\end{theorem}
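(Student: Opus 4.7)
The plan is to derive the result from the algorithmic version of L\"{o}wner-John's theorem developed in \citet{lovasz1990geometric} (building on \citet{grtschel1988geometric}), which is exactly what the statement cites. Since computing the exact John ellipsoid is not known to be polynomial for arbitrary convex bodies, the point of \Cref{thm:john theorem} is to trade off a mild worsening of the containment ratio (from $d$ to $2d^{3/2}$) for polynomial runtime.

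First, I would work in the standard oracle model: assume that $K$ is presented by a weak separation (or membership) oracle, together with inner and outer radii $r$ and $R$ such that $r \ellipsoid_0 \subseteq K \subseteq R \ellipsoid_0$ for some known ball $\ellipsoid_0$. This is the setting in which the ellipsoid method is defined, and it is the natural framework for the cited result.

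Second, I would invoke the \emph{shallow-cut ellipsoid method}. Starting from the outer ball of radius $R$, at each iteration a test point (the current ellipsoid's center) is queried against the separation oracle. If it fails to lie sufficiently deep inside $K$, the oracle returns a shallow separating hyperplane, and the standard update produces a new ellipsoid still containing $K$ whose volume decreases by a factor $1 - \Omega(1/\dimension)$. After $\mathrm{poly}(\dimension, \log(R/r))$ iterations, the volume of the current ellipsoid is within a constant factor of the minimum-volume ellipsoid enclosing $K$.

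Third, from this approximate outer ellipsoid $\ellipsoid^\ast \supseteq K$, I would produce the inner ellipsoid by shrinking: by \Cref{thm:john theorem origin}, the true minimum-volume enclosing ellipsoid scaled by $1/\dimension$ lies inside $K$. The algorithm only finds an approximation, and the approximation error (in linear terms) costs an extra $O(\sqrt{\dimension})$ factor, yielding an inner ellipsoid $\ellipsoid$ with $\ellipsoid \subseteq K \subseteq 2\dimension^{3/2} \ellipsoid$ after appropriate rescaling.

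The main obstacle is quantifying precisely how the shallow-cut method's approximation of the Löwner–John outer ellipsoid translates into a multiplicative loss on the inner containment. The clean $\dimension$ factor from \Cref{thm:john theorem origin} only applies to the exact optimum, so one has to track how the volume-approximation error propagates linearly in each direction; bounding this propagation by $O(\sqrt{\dimension})$ (so the product is $2\dimension^{3/2}$) is the technical heart of the argument, and is exactly what is established in the cited work of \citet{lovasz1990geometric}.
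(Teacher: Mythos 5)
Your proposal is correct and takes essentially the same route as the paper: the paper offers no proof of \Cref{thm:john theorem} at all, importing it as the weak (algorithmic) L\"{o}wner--John theorem of \citet{lovasz1990geometric}, which is precisely the shallow-cut ellipsoid argument in the oracle model that you invoke. One caveat on your sketch of the internals, in case you flesh it out: the extra $\sqrt{\dimension}$ loss does not come from propagating a volume-approximation error (an ellipsoid whose volume is within a constant factor of the minimum-volume enclosing ellipsoid need not have any fixed concentric shrink inside $K$, since volume closeness does not control shape), but rather from the termination test of the shallow-cut method, which certifies that the axis endpoints of the current ellipsoid scaled by $\sfrac{1}{(\dimension+1)}$ lie in $K$, so that the cross-polytope they span --- and hence the concentric shrink of the ellipsoid by a further factor $\sfrac{1}{\sqrt{\dimension}}$ --- is contained in $K$.
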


By plugging the polynomial time algorithm
for finding John's ellipsoid
from \citet{lovasz1990geometric} 
into our \Cref{alg:explobycoor}, 
and setting the parameter $\parameter = 2^{14} \dimension^6
\log (\sfrac{4\dimension \log \horizon}{\delta})$, 
we have an computationally efficient algorithm whose 
regret is of 
$\bigO{\frac{\dimension^{5/2} \corruption \log \horizon}{\truediff} 
+ \frac{\dimension^6 
\log (\sfrac{\dimension \log \horizon}{\delta})
\log \horizon}{\truediff^2}}$. 

\section{Conclusion and open problems}
\label{sec:conclusion}

We provide the first algorithm to deal with the stochastic linear optimization with adversarial corruption. 
With probability $1 - \delta$, our algorithm achieves a regret bound of $\bigO{\frac{\dimension^{5/2} \corruption \log \horizon}{\truediff}
+ \frac{\dimension^6 
\log (\sfrac{\dimension \log \horizon}{\delta})
\log \horizon}{\truediff^2}}$, which increases linearly in the total amount of corruption. 
By setting $\delta = \sfrac{1}{T}$, the expected regret of our algorithm yields 
$\bigO{\frac{\dimension^{5/2} \corruption \log \horizon}{\truediff}
+ \frac{\dimension^6 
\log^2\horizon}{\truediff^2}}$. 
Compared to the lower bound given by
~\cite{lattimore2017end} in the no-corruption setting, 
our expected regret only loses an extra multiplicative factor of $\log T$ asymptotically.
It is not clear to us whether this $\log \horizon$ gap is necessary.

Another interesting problem is to extend our model to the linear contextual bandit problem. 
The main challenge here is how one can estimate the actual gap in expected reward when the decision set varies over time.


\newpage
\bibliography{ref}

\appendix

\section*{Appendix}

\section{Concentration Inequality}
\label{apx:concentration}
\begin{lemma}[\citet{hoeffding1963probability}]
\label{lem:concentration}
For any $n$, 
let $\rv_1, \dots, \rv_n$ be independent 
sub-Gaussian random variables 
with variance proxy $\sigma^2$. 
For any $\epsilon > 0$, 
we have 
\begin{align*}
\Pr \mlr{\frac{1}{n} 
\abs{\sum_{i=1}^n (\rv_i - \expect{\rv_i})} \geq \epsilon}
\leq 2 \exp\blr{-\frac{n \epsilon^2}{2 \sigma^2}}. 
\end{align*}
Moreover, if $\rv_1, \dots, \rv_n$ are 
independent random variables bounded in $[-1, 1]$, 
then for any $\epsilon > 0$, 
we have 
\begin{align*}
\Pr \mlr{
\abs{\sum_{i=1}^n (\rv_i - \expect{\rv_i})} 
\geq \epsilon}
\leq 2 \exp\blr{-\frac{\epsilon^2}
{3\sum_{i=1}^n \expect{\rv_i} + \epsilon}}. 
\end{align*}
\end{lemma}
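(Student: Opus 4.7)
The plan is to establish both tail bounds via the standard exponential‐moment (Chernoff/Cramér) method: bound the moment generating function (MGF) of a single centered variable, extend to the sum using independence, apply Markov's inequality, and optimize over the free parameter.

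For the first inequality, I would use the defining property of a sub-Gaussian random variable with variance proxy $\sigma^2$, namely $\expect{\exp(\lambda(\rv_i - \expect{\rv_i}))} \leq \exp(\lambda^2 \sigma^2 / 2)$ for every $\lambda \in \reals$. Setting $S_n = \sum_{i=1}^n (\rv_i - \expect{\rv_i})$, independence makes the centered MGF factorize, yielding $\expect{e^{\lambda S_n}} \leq \exp(n \lambda^2 \sigma^2 / 2)$. Markov's inequality then gives $\Pr[S_n \geq n \epsilon] \leq \exp(-\lambda n \epsilon + n \lambda^2 \sigma^2 / 2)$; optimizing over $\lambda > 0$ by taking $\lambda = \epsilon / \sigma^2$ produces the bound $\exp(-n \epsilon^2 / (2\sigma^2))$. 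Applying the symmetric argument to $-S_n$ and a union bound over the two tails supplies the factor $2$ on the right-hand side, matching the stated form.

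For the second inequality I would invoke a Bernstein-style MGF bound appropriate for random variables bounded in absolute value by $1$: for $0 \leq \lambda < 3$, $\expect{e^{\lambda(\rv_i - \expect{\rv_i})}} \leq \exp\bigl(\tfrac{\lambda^2 \variance{\rv_i}}{2(1 - \lambda/3)}\bigr)$. Taking the product over independent terms gives the MGF bound $\exp\bigl(\tfrac{\lambda^2 V_n}{2(1 - \lambda/3)}\bigr)$ for $S_n$, where $V_n = \sum_i \variance{\rv_i}$. Markov's inequality followed by the usual choice $\lambda = \epsilon / (V_n + \epsilon/3)$ yields a bound of the form $\exp(-\epsilon^2 / (2V_n + 2\epsilon/3))$. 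Using that $\rv_i \leq 1$ implies $\variance{\rv_i} \leq \expect{\rv_i^2} \leq \expect{\rv_i}$ in the relevant regime, one upper bounds $V_n$ by $\sum_i \expect{\rv_i}$; absorbing numerical constants into the coefficient of the expectation term yields the stated denominator $3\sum_i \expect{\rv_i} + \epsilon$. A symmetric argument on $-S_n$ plus a union bound produces the leading factor $2$.

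The step I expect to be most delicate is not the Chernoff machinery—both arguments are textbook—but rather matching the exact constants in the Bernstein-type denominator, in particular justifying the replacement of $\sum_i \variance{\rv_i}$ by $\sum_i \expect{\rv_i}$. In the regime where the $\rv_i$ are effectively nonnegative (so that the variance is bounded by the mean), this substitution is immediate; otherwise one has to argue that the stated bound is merely an upper bound and that the slack can be absorbed into the leading constant. The optimization over $\lambda$ and the final union bound are then routine.
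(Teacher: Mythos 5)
The paper does not prove this lemma at all: it is stated in the appendix purely as a citation of \citet{hoeffding1963probability}, so there is no in-paper argument to compare against. Your exponential-moment reconstruction is the standard route, and for the first inequality it is complete and correct: the sub-Gaussian MGF bound, factorization over independent terms, Markov's inequality, the choice $\lambda = \epsilon/\sigma^2$, and a union bound over the two tails give exactly $2\exp\blr{-n\epsilon^2/(2\sigma^2)}$.

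The delicate point you flag in the second inequality is a genuine gap, and your proposed fallback does not close it. The substitution $\sum_i \variance{\rv_i} \leq \sum_i \expect{\rv_i}$ rests on $\expect{\rv_i^2} \leq \expect{\rv_i}$, which requires $\rv_i \in [0,1]$, not merely $\rv_i \in [-1,1]$. The resulting slack cannot be ``absorbed into the leading constant'': for a single Rademacher variable ($\rv_1 = \pm 1$ with probability $\sfrac{1}{2}$ each) and $\epsilon = 1$, the stated right-hand side is $2e^{-1} < 1$ while the left-hand side equals $1$, so the second inequality as written is simply false for general $[-1,1]$-valued variables. The fix is to strengthen the hypothesis to $\rv_i \in [0,1]$ (or to replace $\sum_i\expect{\rv_i}$ in the denominator by $\sum_i\expect{\rv_i^2}$ or $\sum_i\expect{\abs{\rv_i}}$); under that hypothesis your Bernstein-type argument with $\lambda = \epsilon/(V_n + \epsilon/3)$ goes through, and $2V_n + 2\epsilon/3 \leq 3\sum_i\expect{\rv_i} + \epsilon$ yields the stated form. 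This restriction is harmless for how the bound would be used in this paper (the relevant variables are Bernoulli exploration indicators), but it is a necessary correction to the statement rather than a constant-chasing detail.
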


\end{document}